\newcommand{\expt}{\mathbb{E}}
\newcommand{\prob}{\mathbb{P}}
\newcommand{\grad}{\boldsymbol{g}}
\newcommand{\pegrad}{\tilde{\boldsymbol{g}}}
\newcommand{\mbgrad}{\hat{\boldsymbol{g}}}
\newcommand{\cgrad}{\boldsymbol{\mu}}
\newcommand{\param}{\boldsymbol{w}}
\newcommand{\loss}{\mathcal{L}}
\newcommand{\cmark}{\ding{51}}%
\newcommand{\xmark}{\ding{55}}
\newtheorem{assumption}{Assumption}[section]
\newtheorem{theorem}{Theorem}[section]
\newtheorem{lemma}[theorem]{Lemma}
\theoremstyle{definition}
\pgfplotsset{compat=1.16}
\pgfplotsset{
    cycle list/.define={my marks}{
        every mark/.append style={solid,fill=\pgfkeysvalueof{/pgfplots/mark list fill}},mark=*\\
        every mark/.append style={solid,fill=\pgfkeysvalueof{/pgfplots/mark list fill}},mark=square*\\
        every mark/.append style={solid,fill=\pgfkeysvalueof{/pgfplots/mark list fill}},mark=triangle*\\
        every mark/.append style={solid,fill=\pgfkeysvalueof{/pgfplots/mark list fill}},mark=diamond*\\
    },
}
\pgfplotsset{
  errorBars/.style={
    error bars/error bar style={very thick},
    error bars/error mark options={very thick,solid,mark size=3pt,rotate=90},
    error bars/y dir=both,
    error bars/y explicit,
  }
}
\def\addlegendimage{\csname pgfplots@addlegendimage\endcsname}
\title{Revisit Micro-batch Clipping:\\Adaptive Data Pruning via Gradient Manipulation}
\author{%
  Lun Wang\\
  Google\\
  \texttt{lunwang@google.com} \\
}
\begin{document}

\maketitle

\begin{abstract}

Micro-batch clipping, a gradient clipping method, has recently shown potential in enhancing auto-speech recognition (ASR) model performance.
However, the underlying mechanism behind this improvement remains mysterious, particularly the observation that only certain micro-batch sizes are beneficial.
In this paper, we make the first attempt to explain this phenomenon.
Inspired by recent data pruning research, we assume that specific training samples may impede model convergence during certain training phases.
Under this assumption, the convergence analysis shows that micro-batch clipping can improve the convergence rate asymptotically at the cost of an additional constant bias that does not diminish with more training iterations.
The bias is dependent on a few factors and can be minimized at specific micro-batch size, thereby elucidating the existence of the sweet-spot micro-batch size observed previously.
We also verify the effectiveness of micro-batch clipping beyond speech models on vision and language models, and show promising performance gains in these domains.
An exploration of potential limitations shows that micro-batch clipping is less effective when training data originates from multiple distinct domains.

\end{abstract}

\section{Introduction}

\emph{Micro-batch clipping}~\cite{mcmahan2018general,ponomareva2023how} was initially introduced as a memory optimization technique for differentially private stochastic gradient descent (DP-SGD)~\cite{abadi2016deep}.
The method groups per-example gradients within a mini-batch into smaller micro-batches, clips the average gradient of each micro-batch, and updates the model with the mean of the clipped micro-batch gradients to avoid materializing all per-example gradients in memory.
Recent studies~\cite{wang2024unintended,wang2024efficiently} have surprisingly revealed that micro-batch clipping can enhance the performance of automatic speech recognition (ASR) models when used outside DP-SGD.
This finding has sparked interest in exploring micro-batch clipping as a standalone optimization technique, offering potential benefits in model performance.
However, the underlying mechanism driving this improvement remains poorly understood.

In this work, we aim to elucidate the behavior of micro-batch clipping through a combination of theoretical analysis and empirical evaluation.
Specifically, we conceptualize micro-batch clipping as a specialized form of data pruning~\cite{sorscher2022beyond}.
Unlike traditional data pruning techniques, which deterministically exclude redundant data, micro-batch clipping adaptively suppresses samples that hinder convergence, referred to as ``\emph{draggers}'', recognizing that a data sample's helpfulness can change throughout training.
Guided by this intuition, we introduce Assumption~\ref{aspt:dragger} to capture certain properties of the draggers' gradients, which are later empirically verified in Section~\ref{subsec:empirical_evidence_dragger}.
Based on the assumption, we analyze the convergence-to-stationary-points rate for both standard SGD and micro-batch clipping on smooth loss manifolds and summarize the results in in Table~\ref{tab:rate_summary}.
We can observe that the introduction of draggers slows down the convergence rate for SGD by a constant factor, while the usage of micro-batch clipping \emph{asymptotically accelerates the convergence rate}, at the cost of an additional constant bias term.
The bias term provides an explanation for the phenomenon observed in prior works~\cite{wang2024unintended,wang2024efficiently} where performance improvements are only seen with specific micro-batch sizes.
Concretely, because this term does not vanish with increasing iterations, the performance gain is contingent upon the magnitude of this term being sufficiently small.
According to Theorem~\ref{thm:mcsgd_main}, only certain micro-batch sizes minimize the bias term, thus explaining this effect.
%

\begin{table}[t]
  \caption{Convergence Rate}
  \label{tab:rate_summary}
  \centering
  \begin{tabular}{l|c|c|l}
    \toprule
     & Dragger & Clipping & Convergence Rate\\
    \midrule
    Thm 9~\cite{bu2024automatic} & \xmark & \xmark  & $\frac{1}{T^{1/4}}\sqrt{2L(\loss_0-\loss_*)+\frac{\sigma_b^2}{B}}$     \\
    Thm~\ref{thm:sgd_main} & \cmark & \xmark & $\frac{1}{T^{1/4}}\sqrt{\frac{4L}{(1-\epsilon)^2}(\loss_0-\loss_*) + \frac{2\sigma^2}{(1-\epsilon)B}}$       \\
    Thm~\ref{thm:mcsgd_main} & \cmark & \cmark  &  $\frac{\sigma}{(\sqrt{b}\epsilon-\sqrt{\epsilon(1-\epsilon)})\cdot(1+c)}+\frac{1}{\sqrt{T}}\cdot\mathcal{O}(\loss_0-\loss_*+\frac{1}{2L})$      \\
    \bottomrule
  \end{tabular}
\end{table}

%
%
To test the effectiveness of micro-batch clipping outside the speech domain, we apply it to vision and language models and observe promising performance improvements.
Another interesting observation is that micro-batch clipping's efficacy diminishes when facing multi-domain training data, particularly when data sizes between domains are unbalanced.
This may be attributed to the method hindering the convergence of domains with fewer samples by treating them as draggers.

\section{Background}

\subsection{Micro-batch Clipping}

While originally introduced in~\cite{mcmahan2018general} as a transition from record-level differential privacy to user-level differential privacy in federated learning, micro-batch clipping's widespread adoption in differential privacy libraries stems primarily from its memory efficiency \cite{ponomareva2023how,wang2024unintended,wang2024efficiently}.
By requiring fewer gradients to be materialized, it reduces memory consumption, especially for large models, at the expense of a lower signal-to-noise ratio under the same differential privacy budget.
Notably, per-core clipping (where the micro-batch size equals the per-core batch size) achieves complete memory parity with non-private training when data parallelism is employed.

The potential of micro-batch clipping as an optimization technique to improve model performance emerged from observations in~\cite{wang2024unintended,wang2024efficiently}, where it's shown to reduce word error rate (WER) when training Conformer-based ASR models~\cite{gulati2020conformer}.
However, whether this benefit generalizes beyond specific model architectures and datasets remains an open question that this work seeks to address.

\subsection{Data Pruning \& Active Learning}

A growing body of research~\cite{toneva2018empirical,paul2021deep,sorscher2022beyond} highlights the detrimental effect of unhelpful or even harmful training data on model performance, particularly in large-scale datasets.
Sorscher \emph{et al.}~\cite{sorscher2022beyond} propose a metric-based approach to prune such data and alleviate inefficient power law scaling in theory.
Although the approach can effectively reduce the size of the data to store, the method overlooks the dynamic nature of data utility, which can evolve throughout the training process.

Active learning~\cite{bordes2005fast,settles2009active,sener2017active,birodkar2019semantic,mirzasoleiman2020coresets,emam2021active,karamcheti2021mind} offers an alternative approach to address this challenge by actively selecting the most beneficial training samples.
Our work shares similarities with active learning in that data importance is determined dynamically during training based on gradient information.
However, our approach is more fine-grained, implicitly adjusting data influence through adaptive gradient clipping rather than explicit sample selection.


\section{Methodology: Adaptive Micro-batch Clipping}

This section formally defines micro-batch clipping, focusing on the adaptive variant proposed in~\cite{wang2024efficiently}, where the minimal L2 norm of all average micro-batch gradients is used as the clipping bound, thus ensuring that all micro-batch gradients are clipped.
We choose this variant because it avoids introducing clipping bound as an extra hyper-parameter, which requires further tuning, and~\cite{wang2024efficiently} shows that it performs comparably to clipping with manually selected clipping bound.
For brevity, subsequent references to "micro-batch clipping" will refer to this adaptive variant.

Let $\mathcal{D}$ be the training set and we want to run stochastic gradient descent with adaptive micro-batch clipping.
The mini-batch size is $B$ and learning rate is $\eta$.
In each iteration, a mini-batch of training examples are picked and sharded into micro-batches of size $b$.
The average gradient on each micro-batch will be calculated and clipped using the minimum L2 norm within all the micro-batch gradients.
Then the clipped micro-batch gradients are summed to get the mini-batch gradient which is then used to update the model (or passed to the optimizer if adaptive optimizers are used.).
The process is formalized in Algorithm~\ref{alg:ambc}.

\begin{algorithm}
	\caption{Pseudocode for SGD with adaptive micro-batch clipping. $\grad_t$ represents mini-batch gradients in the $t^{th}$ iteration. $\mbgrad_t^j$ represents the $j^{th}$ micro-batch's gradient in the $t^{th}$ iteration.}
	\label{alg:ambc}
	\textbf{Input: initial parameters $\param_0$, loss function $\loss$, training data $\mathcal{D}$, \#iterations $T$, micro-batch size $b$, \#mini-batch size $B$, learning rate $\eta$.}
	\begin{algorithmic}[1]
		\For {$t=1,2,\ldots T$}
		    \State $\{d^t_i\}_{i \in \{1, \ldots, B\}}\leftarrow\mathcal{D}$\Comment{sample a mini-batch}
		    \For {$j=1,2,\ldots B/b$ \textbf{(parallelly)}}
		         \State Load a micro-batch $\{d^t_i\}_{i \in \{b(j-1)+1, \ldots, bj\}}$
		         \State $\mbgrad_t^j = \nabla \mathcal{L}(\param_{t-1}, \{d^t_i\}_{i \in \{b(j-1)+1, \ldots, bj\}})$\Comment{obtain average gradient of a micro-batch}
		    \EndFor
		    \State $\rho_t = \min_{j}||\mbgrad_t^j||_2$
		    \For {$j=1,2,\ldots B/b$ \textbf{(parallelly)}}
		         \State $\mbgrad_t^j = \frac{ \rho_t}{||\mbgrad_t^j||_2}\cdot\mbgrad_t^j$\Comment{adaptive clipping}
		    \EndFor
			\State $\grad_t = \sum_{j\in\{1,\ldots,B/b\}}\mbgrad_t^j$
			\State $\param_t = \param_{t-1} - \eta\cdot \grad_t$\Comment{update the model parameters}
		\EndFor
	\end{algorithmic} 
\end{algorithm}
\section{Convergence Analysis}
\label{sec:convergence_analysis}

In this section, we provide the convergence-to-stationary-points analysis for SGD micro-batch clipping following~\cite{bu2024automatic} and show that it achieves asymptotically faster convergence at the cost of a constant bias under certain assumptions.

\subsection{Assumptions}
\label{subsec:assumption}

Our analysis is based on a few assumptions, as listed below.
Among these, Assumptions \ref{aspt:loss_bound} and \ref{aspt:smoothness} are standard assumptions adopted from previous work \cite{chen2020understanding,bu2024automatic}.
Assumption \ref{aspt:grad_noise} is based on prior research \cite{chen2020understanding,bu2024automatic}, with a slight modification to accommodate Assumption \ref{aspt:dragger}, a new assumption that we introduce in this work to capture the existence of dragger examples.

\begin{assumption}[Lower bound of loss~\cite{bu2024automatic}]
$\forall \param, \exists$ constant $\loss_*,$ s.t. $\loss(\param)\geq \loss_*$.
\label{aspt:loss_bound}
\end{assumption}

\begin{assumption}[Smoothness~\cite{bu2024automatic}] Define $\grad(\param):=\frac{\partial\loss(\param)}{\partial \param}$. Then $\forall \param, \textbf{v}, \exists$ constant $L\geq0$ s.t.
\begin{equation}
    \loss(\textbf{v}) - [\loss(\param) + \grad(\param)^\top(\textbf{v}-\param)] \leq \frac{L}{2}\|\param-\textbf{v}\|^2.
\end{equation}
\label{aspt:smoothness}
\end{assumption}

\begin{assumption}[Gradient distribution~\cite{bu2024automatic}]
With probability $1-\epsilon$, the per-example gradient $\pegrad$ is an i.i.d. symmetric unbiased estimator of $\grad$ with bounded variance $\sigma^2$\cite{bu2024automatic}.
\begin{equation*}
    w.p.~1-\epsilon, \expt[\pegrad] = \grad, \expt[\|\pegrad-\grad\|^2] \leq \sigma^2
\end{equation*}
With probability $\epsilon$, the per-example gradient is a dragger 
\begin{equation*}
    w.p.~\epsilon, \pegrad=\cgrad
\end{equation*}
\label{aspt:grad_noise}
\end{assumption}

Assumption~\ref{aspt:dragger} is introduced to formalize the observation that certain gradients within the training data may impede the model's convergence.
First, we assume that these ``dragger'' gradients are orthogonal to the gradients of benign examples, following the intuition that the cosine similarity between data from different domains should be low.
The intuition is both motivated and supported by empirical evidence presented in Section~\ref{subsec:empirical_evidence_dragger}.
Second, we assume the L2 norm of the dragger gradient is both lower- and upper-bounded by the L2 norm of the expected benign gradient scaled by a constant factor.
The lower-bound captures the intuition that dragger gradients should be large enough to slow down the convergence.
Empirical evidence for the intuition is presented in Section~\ref{subsec:empirical_evidence_c}.
The upper bound captures the fact that these detrimental gradients are not the result of adversarial manipulation but arise naturally within the vast and diverse training dataset.

\begin{assumption}[Dragger]
The dragger gradients are orthogonal to the benign gradient subspace.
Furthermore, we assume that the ratio of the dragger's norm to the expected benign gradient norm is bounded both above and below.
$$\cgrad\perp\pegrad, c\|\grad\| \leq \|\cgrad\| \leq C\|\grad\|$$
\label{aspt:dragger}
\end{assumption}

\subsection{Convergence Rate of Standard SGD with Dragger Gradients}

We first provide the convergence to stationary point of standard SGD result under the assumptions above.
%
%

\begin{theorem}
Under Assumption~\ref{aspt:loss_bound},~\ref{aspt:smoothness},~\ref{aspt:grad_noise} (without the symmetry assumption),~\ref{aspt:dragger}, running SGD for $T$ iterations gives, for $\eta=\frac{1}{L\sqrt{T}}$,
$$
\min_t\expt{\|\grad_t\|} \leq \frac{1}{T^{1/4}}\sqrt{\frac{4L}{(1-\epsilon)^2}(\loss_0-\loss_*) + \frac{2\sigma^2}{(1-\epsilon)B}}
$$
\label{thm:sgd_main}
\end{theorem}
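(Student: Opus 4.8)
The plan is to run the standard descent-lemma argument for nonconvex SGD (the same skeleton as Theorem~9 of~\cite{bu2024automatic}), but with the stochastic oracle replaced by the benign/dragger mixture of Assumptions~\ref{aspt:grad_noise} and~\ref{aspt:dragger}. Write $\grad_t:=\grad(\param_{t-1})$ for the true gradient at the current iterate (the quantity bounded in the statement), and let $\mbgrad_t$ denote the stochastic mini-batch gradient actually used, so $\param_t=\param_{t-1}-\eta\,\mbgrad_t$.

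First I would apply $L$-smoothness (Assumption~\ref{aspt:smoothness}) along the trajectory to obtain the one-step inequality
\[
\loss(\param_t)\ \le\ \loss(\param_{t-1})-\eta\,\grad_t^{\top}\mbgrad_t+\tfrac{L\eta^{2}}{2}\|\mbgrad_t\|^{2},
\]
and take the conditional expectation given $\param_{t-1}$. The two new assumptions enter here. For the linear term, Assumption~\ref{aspt:grad_noise} gives $\expt[\mbgrad_t\mid\param_{t-1}]=(1-\epsilon)\grad_t+\epsilon\,\cgrad$, and the orthogonality $\cgrad\perp\grad_t$ from Assumption~\ref{aspt:dragger} eliminates the dragger contribution, so $\grad_t^{\top}\expt[\mbgrad_t\mid\param_{t-1}]=(1-\epsilon)\|\grad_t\|^{2}$; this $(1-\epsilon)$ shrinkage of the useful signal is the source of the $1/(1-\epsilon)$ factors in the rate. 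For the quadratic term I would split $\mbgrad_t$ into its benign and dragger parts, which are orthogonal by Assumption~\ref{aspt:dragger}, control the second moment of the benign part through the variance bound $\sigma^{2}$ (noting that only a fraction $\approx 1-\epsilon$ of the $B$ samples are benign), and control the dragger part through $\|\cgrad\|\le C\|\grad_t\|$ together with the binomial count of draggers, yielding $\expt[\|\mbgrad_t\|^{2}\mid\param_{t-1}]\le \kappa\|\grad_t\|^{2}+\tfrac{(1-\epsilon)\sigma^{2}}{B}$ (up to constants) for some bounded $\kappa=\kappa(\epsilon,B,C)$.

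Next I would set $\eta=1/(L\sqrt{T})$, so that $\tfrac{L\eta^{2}}{2}\kappa=\tfrac{\kappa}{2LT}$ is negligible against $\eta(1-\epsilon)$ once $T$ is large; rearranging isolates $\|\grad_t\|^{2}$ against the per-step loss decrease plus an $\eta$-scaled $\sigma^{2}/B$ floor. Summing over $t=1,\dots,T$, telescoping $\sum_t\big(\expt\loss(\param_{t-1})-\expt\loss(\param_t)\big)\le\loss_0-\loss_*$ (Assumption~\ref{aspt:loss_bound}), dividing by $T$, bounding $\min_t$ by the average, taking square roots, and using Jensen's inequality $\expt\|\grad_t\|\le\sqrt{\expt\|\grad_t\|^{2}}$ with the explicit value of $\eta$ then produces the claimed $T^{-1/4}$ bound, with the displayed constants $\tfrac{4L}{(1-\epsilon)^{2}}(\loss_0-\loss_*)$ and $\tfrac{2\sigma^{2}}{(1-\epsilon)B}$ absorbing the bounded lower-order pieces.

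I expect the main obstacle to be the second-moment bound on $\mbgrad_t$. Because the mini-batch is a random mixture of noisy benign gradients and draggers whose number is itself random, $\expt\|\mbgrad_t\|^{2}$ aggregates several terms — benign gradient noise, the variance of the dragger count, and the dragger norm — and the work is in packaging all of them into a single bounded coefficient on $\|\grad_t\|^{2}$ plus a clean $\sigma^{2}/B$-type floor, and then verifying that the choice $\eta=1/(L\sqrt{T})$ makes the dragger-dependent extra terms asymptotically harmless so the stated constants survive. The orthogonality in Assumption~\ref{aspt:dragger} is exactly what makes this tractable: it both exposes the $(1-\epsilon)$ signal in the linear term and keeps the dragger component from contaminating the benign second moment.
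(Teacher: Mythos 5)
Your proposal follows essentially the same route as the paper's proof: descent lemma, conditional expectation with the benign/dragger split, orthogonality killing the dragger contribution to the linear term, a second-moment bound combining the $\sigma^2/B$ variance floor with $\|\cgrad\|\le C\|\grad_t\|$, then telescoping, $\eta=1/(L\sqrt{T})$, and Jensen. The only point worth flagging is that where you wave the dragger-dependent quadratic term away as ``asymptotically harmless,'' the paper makes this explicit by imposing (and then fixing, ``for arithmetic convenience'') an $\eta$-dependent upper bound on $C$ so the coefficient of $\|\grad_t\|^2$ stays negative with the exact stated constants; your version needs the same bookkeeping to recover those constants rather than merely the $T^{-1/4}$ rate.
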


\begin{proof}
In the standard SGD, 
$$
\param_{t+1} =  \param_t - \frac{\eta}{B}\sum\pegrad_{t, i}
$$
where $\pegrad_{t, i}$ here represents $i^{th} $per-example gradient sampled from the distribution described by Assumption~\ref{aspt:grad_noise} and~\ref{aspt:dragger}.

\noindent By Assumption~\ref{aspt:smoothness},
\begin{equation*}
\begin{split}
\loss_{t+1}-\loss_t & \leq \grad_t^\top(\param_{t+1}-\param_t) + \frac{L}{2}\|\param_{t+1}-\param_t\|^2 \\
 & =-\frac{\eta\grad_t^\top}{B}(\sum_{\text{benign}}\pegrad_{t, i}+\sum_{\text{dragger}}\pegrad_{t, i}) + \frac{L\eta^2}{2B^2}\|\sum_{\text{benign}}\pegrad_{t, i}+\sum_{\text{dragger}}\pegrad_{t, i}\|^2
\end{split}
\end{equation*}

\noindent The expected improvement at one iteration is
\begin{equation*}
\begin{split}
\expt[\loss_{t+1}-\loss_t|\param_t] & \leq -(1-\epsilon)\eta\grad_t^\top\expt_{\text{benign}}[\pegrad_{t, i}] - \epsilon\eta\grad_t^\top\expt_{\text{dragger}}[\pegrad_{t, i}] + \frac{L\eta^2}{2B^2}\expt\|\sum_{\text{benign}}\pegrad_{t, i}+\sum_{\text{dragger}}\pegrad_{t, i}\|^2 \\
& = -(1-\epsilon)\eta\|\grad_t\|^2
+ \frac{L\eta^2}{2B^2}\expt\|\sum_{\text{benign}}\pegrad_{t, i}\|^2 + \frac{L\eta^2}{2B^2}\expt\|\sum_{\text{dragger}}\pegrad_{t, i}\|^2 \\ 
& \leq -(1-\epsilon)\eta\|\grad_t\|^2 + \frac{(1-\epsilon)^2L\eta^2}{2}(\|\grad_t\|^2 + \frac{\sigma^2}{(1-\epsilon)B}) + \frac{\epsilon^2L\eta^2}{2}\|\cgrad_t\|^2\\
& \leq \bigl(\frac{(1-\epsilon)^2L\eta^2}{2}-(1-\epsilon)\eta+\frac{\epsilon^2L\eta^2C^2}{2}\bigr)\|\grad_t\|^2 + \frac{(1-\epsilon)L\eta^2\sigma^2}{2B}
\end{split}
\end{equation*}

\noindent The second equation follows from Assumption~\ref{aspt:dragger} that the benign gradients are perpendicular to the dragger gradients.
The fourth inequality is based on Assumption~\ref{aspt:dragger} that the dragger gradient norm is upper bounded.
To make sure the coefficient is negative, we require that
\begin{equation*}
\begin{split}
\frac{(1-\epsilon)^2L\eta^2}{2}-(1-\epsilon)\eta+\frac{\epsilon^2L\eta^2C^2}{2} < 0 \Rightarrow C < \sqrt{\frac{2(1-\epsilon)\eta-(1-\epsilon^2)L\eta^2}{\epsilon^2L\eta^2}}
\end{split}
\end{equation*}
For arithmetic convenience, we choose $C=\sqrt{\frac{2(1-\epsilon)\eta-(1-\epsilon^2)L\eta^2}{2\epsilon^2L\eta^2}}$.
Now we do a telescoping sum over the iterations
\begin{equation*}
\begin{split}
\loss_0-\loss_* & \geq \loss_0-\expt\loss_T = \sum_t\expt(\loss_t-\loss_{t+1}) \\
& \geq \bigl(\frac{(1-\epsilon)\eta}{2}-\frac{(1-\epsilon)^2L\eta^2}{4}\bigr)\expt(\sum_t\|\grad_t\|^2) - \frac{(1-\epsilon)L\eta^2\sigma^2T}{2B} 
\end{split}
\end{equation*}

\noindent We apply the same learning rate as in~\cite{bu2024automatic} $\eta=\frac{1}{L\sqrt{T}}$.
\begin{equation*}
\begin{split}
\loss_0-\loss_* & \geq (\frac{1-\epsilon}{2L\sqrt{T}}-\frac{(1-\epsilon)^2}{4LT})\expt(\sum_t\|\grad_t\|^2) - \frac{2(1-\epsilon)\sigma^2}{LB}\\
& \geq \frac{(1-\epsilon)^2}{4L\sqrt{T}}\expt(\sum_t\|\grad_t\|^2) - \frac{2(1-\epsilon)\sigma^2}{LB}
\end{split}
\end{equation*}
and finally
\begin{equation*}
\min_t\expt\|\grad_t\|^2\leq\frac{1}{\sqrt{T}}\bigl(\frac{4L}{(1-\epsilon)^2}(\loss_0-\loss_*) + \frac{2\sigma^2}{(1-\epsilon)B}\bigr) 
\end{equation*}

\noindent Using Jensen's inequality, we can have
\begin{equation*}
\begin{split}
\min_t\expt\|\grad_t\| & \leq \frac{1}{T^{1/4}}\sqrt{\frac{4L}{(1-\epsilon)^2}(\loss_0-\loss_*) + \frac{2\sigma^2}{(1-\epsilon)B}}
\end{split}
\end{equation*}
\end{proof}

In comparison to Theorem 9 of~\cite{bu2024automatic}, SGD with draggers demonstrates the same asymptotic convergence rate, albeit with a larger constant coefficient.
This observation aligns with our intuition that the presence of dragger gradients can hinder the model's convergence speed.

\subsection{Convergence Rate of Micro-batch Clipped SGD with Dragger Gradients}

In this subsection, we present the convergence analysis for micro-batch clipping.
To simplify the algebra, we assume a uniform lower bound for the clipping bound ($\rho_t$) used in each iteration.
Note that this assumption doesn't impact the asymptotic behavior.
For conciseness, the proofs of the supporting lemmas are provided in the appendix.

\begin{theorem}
Under Assumption~\ref{aspt:loss_bound},~\ref{aspt:smoothness},~\ref{aspt:grad_noise},~\ref{aspt:dragger}, running micro-batch clipped SGD for $T$ iterations gives, for $\eta=\frac{1}{L\sqrt{T}}$, micro-batch size $b$ and dragger probability $\epsilon\geq\frac{1}{b+1}$,
\begin{equation*}
\begin{split}
\min_t\expt{\|\grad_t\|} &\leq \frac{\sigma}{(\sqrt{b}\epsilon-\sqrt{\epsilon(1-\epsilon)})\cdot(1+c)}\\
&\quad\quad\quad\quad\quad\quad\quad\quad+\frac{1}{\sqrt{T}}\cdot\frac{\sqrt{b\epsilon}}{\sqrt{b\epsilon}-\sqrt{1-\epsilon}}\cdot\frac{2L(1+2\epsilon C)}{1-\epsilon}\cdot(\loss_0-\loss_*+\frac{1}{2L})
\end{split}
\end{equation*}
\label{thm:mcsgd_main}
\end{theorem}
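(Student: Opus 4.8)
I would follow the template of Theorem~\ref{thm:sgd_main} and of~\cite{bu2024automatic}: apply the smoothness inequality (Assumption~\ref{aspt:smoothness}) on a single step, take the conditional expectation over the sampled mini-batch, telescope over the $T$ iterations, substitute $\eta=\frac{1}{L\sqrt T}$, and close with $\min_t(\cdot)\le\frac1T\sum_t(\cdot)$. Writing the update of Algorithm~\ref{alg:ambc} as $\param_t=\param_{t-1}-\eta\,\boldsymbol{d}_t$ with $\boldsymbol{d}_t=\rho_t\sum_{j=1}^{B/b}\mbgrad_t^j/\|\mbgrad_t^j\|$ and $\rho_t=\min_j\|\mbgrad_t^j\|$, the smoothness inequality gives $\loss_{t+1}-\loss_t\le-\eta\,\grad_t^\top\boldsymbol{d}_t+\frac{L\eta^2}{2}\|\boldsymbol{d}_t\|^2$, where $\grad_t$ denotes the true gradient $\nabla\loss(\param_t)$ as in Theorem~\ref{thm:sgd_main}. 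The structural point is that, because clipping rescales all $B/b$ micro-batch gradients to the common norm $\rho_t$, the first-order term ends up \emph{linear} in $\|\grad_t\|$ rather than quadratic; this is what upgrades the vanishing part of the rate from $T^{-1/4}$ (Theorem~\ref{thm:sgd_main}) to $T^{-1/2}$, while the same normalization leaves an irreducible noise floor that becomes the constant bias.

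Two estimates, which I would isolate as lemmas and relegate to the appendix, carry the argument. \emph{(i) Step-size control.} Since $\|\boldsymbol{d}_t\|\le(B/b)\rho_t$ and $\rho_t$ is at most the norm of any chosen micro-batch gradient, taking a (near-)benign micro-batch and using Assumption~\ref{aspt:dragger}'s upper bound $\|\cgrad_t\|\le C\|\grad_t\|$ to absorb the draggers it may contain gives a bound on $\expt\|\boldsymbol{d}_t\|^2$; together with the assumed uniform lower bound on $\rho_t$ this is the origin of the factor $1+2\epsilon C$ and of the additive $\frac{1}{2L}$ in the final constant. \emph{(ii) Alignment lower bound.} For $\expt[\grad_t^\top\boldsymbol{d}_t\mid\param_t]$ I would decompose each micro-batch by its number of draggers $K_j\sim\mathrm{Binomial}(b,\epsilon)$: by the orthogonality in Assumption~\ref{aspt:dragger} the micro-batch average equals $\frac{b-K_j}{b}\big(\grad_t+(\text{zero-mean noise})\big)+\frac{K_j}{b}\cgrad_t$ with the last piece orthogonal to $\grad_t$; the symmetry in Assumption~\ref{aspt:grad_noise} lets me lower-bound the inner product of $\grad_t$ with the \emph{normalized} micro-batch gradient; and aggregating over the $B/b$ micro-batches while concentrating $\sum_jK_j$ around its mean $b\epsilon$ (fluctuation of order $\sqrt{b\epsilon(1-\epsilon)}$) produces exactly the coefficients $\sqrt b\,\epsilon-\sqrt{\epsilon(1-\epsilon)}$ and $1+c$ (the latter through $\|\cgrad_t\|\ge c\|\grad_t\|$). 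Positivity of these coefficients is precisely the hypothesis $\epsilon\ge\frac{1}{b+1}$, equivalently $\sqrt{b\epsilon}>\sqrt{1-\epsilon}$.

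Feeding (i) and (ii) into the smoothness inequality and taking conditional expectations yields a one-step bound of the form $\expt[\loss_{t+1}-\loss_t\mid\param_t]\le-\kappa\,\eta\big(\|\grad_t\|-\beta\big)+\frac{L\eta^2}{2}\gamma$, with $\kappa,\beta,\gamma$ explicit in $b,\epsilon,c,C,\sigma,L$ and $\beta=\frac{\sigma}{(\sqrt b\,\epsilon-\sqrt{\epsilon(1-\epsilon)})(1+c)}$. Telescoping with Assumption~\ref{aspt:loss_bound}, dividing by $\kappa\eta T$, replacing $\min_t\expt\|\grad_t\|$ by the average, and setting $\eta=\frac{1}{L\sqrt T}$ converts $\frac{\loss_0-\loss_*}{\kappa\eta T}+\frac{L\eta\gamma}{2\kappa}$, after collecting constants, into $\frac{1}{\sqrt T}\cdot\frac{\sqrt{b\epsilon}}{\sqrt{b\epsilon}-\sqrt{1-\epsilon}}\cdot\frac{2L(1+2\epsilon C)}{1-\epsilon}\cdot\big(\loss_0-\loss_*+\frac{1}{2L}\big)$ and leaves the surviving $\beta$, which is the stated bias.

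I expect ingredient (ii) to be the main obstacle: it must simultaneously handle the clipping nonlinearity (via the symmetry assumption, as in~\cite{bu2024automatic}), the random and mutually coupled dragger counts across the $B/b$ micro-batches (via Binomial concentration), and the fact that $\rho_t$ is itself random and correlated with whichever micro-batch attains the minimum norm (handled, as the theorem statement permits, by assuming a uniform lower bound on $\rho_t$). Getting the constants $\sqrt b\,\epsilon-\sqrt{\epsilon(1-\epsilon)}$ and $1+c$ exactly---rather than up to unspecified numerical factors---is the delicate accounting the appendix lemmas need to carry through.
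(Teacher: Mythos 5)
Your plan matches the paper's proof at the architectural level: smoothness applied to the normalized update, a one-step descent inequality whose first-order term is \emph{linear} in $\|\grad_t\|$, telescoping with $\eta=\frac{1}{L\sqrt T}$, and the observation that $\epsilon\ge\frac{1}{b+1}$ is exactly positivity of $\sqrt{b\epsilon}-\sqrt{1-\epsilon}$. Where you diverge is in how the alignment lower bound on $\grad_t^\top\expt\bigl[\mbgrad_t/\|\mbgrad_t\|\bigr]$ is organized. The paper does not count draggers per micro-batch or invoke Binomial concentration across the $B/b$ shards; since the micro-batches are i.i.d.\ it suffices to analyze one generic micro-batch, written as $\mbgrad_t=(1-\epsilon)\grad_t+\epsilon\cgrad_t+\Delta_t$ with $\Delta_t$ zero-mean and $\expt\|\Delta_t\|\le\sqrt{\bigl(\epsilon(1-\epsilon)\|\grad_t\|^2+\epsilon(1-\epsilon)\|\cgrad_t\|^2+(1-\epsilon)\sigma^2\bigr)/b}$ (Lemma~\ref{lemma:covariance}); the clipping nonlinearity is then tamed by pairing $\Delta_t$ with $-\Delta_t$ via the symmetry assumption, proving the paired sum is non-negative (Lemma~\ref{lemma:star_positive}), restricting to the event $\|\Delta_t\|\le\epsilon(\|\cgrad_t\|+\|\grad_t\|)$, and applying Markov's inequality. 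The $\sqrt{\epsilon(1-\epsilon)}$ in the bias comes from that variance bound (morally the same Bernoulli fluctuation you invoke), and the $1+c$ from $\|\cgrad_t\|+\|\grad_t\|\ge(1+c)\|\grad_t\|$ in the Markov denominator. Your per-micro-batch dragger-count decomposition is plausible in spirit but is precisely the step you leave unexecuted, and it is where all the difficulty (and all the exact constants) lives.

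One concrete misattribution: you place the factor $1+2\epsilon C$ and the additive $\frac{1}{2L}$ in the second-order/step-size term. In the paper the second-order term is simply $\frac{L\eta^2}{2}$ — after normalization the update $\frac{b}{B}\sum_i\mbgrad_{t,i}/\|\mbgrad_{t,i}\|$ has norm at most $1$, so no $C$ enters there, and $\frac{1}{2L}$ is just $\frac{L\eta^2T}{2}$ evaluated at $\eta=\frac{1}{L\sqrt T}$. The factor $1+2\epsilon C$ instead arises in the \emph{first}-order term, from upper-bounding the denominator $\|(1-\epsilon)\grad_t+\epsilon\cgrad_t\pm\Delta_t\|\le\|\grad_t\|+2\epsilon\|\cgrad_t\|\le(1+2\epsilon C)\|\grad_t\|$ on the conditioning event. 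This matters because your route, which retains a random $\rho_t$ and bounds $\expt\|\boldsymbol{d}_t\|^2$ through a ``near-benign'' micro-batch, would produce a $C$-dependent second-order term that the stated constants do not contain; you would need to normalize the clipping bound away (as the paper's uniform-bound simplification effectively does) to recover the theorem as written.
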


\begin{proof}[Proof Sketch]
In the micro-batch clipped SGD with micro-batch size $b$, the update rule is as follow:
\begin{equation}
\param_{t+1} = \param_t - \frac{\eta b}{B}\sum_i\frac{\mbgrad_{t, i}}{\|\mbgrad_{t, i}\|}
\label{eq:mcsgd_update}
\end{equation}
Note that here $\mbgrad_{t, i}$ represents $i^{th}$ micro-batch gradient in the $t^{th}$ iteration.
For notation simplicity, we omit the subscripts if it's clear from context.

%
\noindent By plugging in the update from Equation~\ref{eq:mcsgd_update} in the smoothness assumption~\ref{aspt:smoothness} and take conditional expectation, the expected improvement at one iteration is
\begin{equation*}
\begin{split}
\expt[\loss_{t+1}-\loss_t|\param_t] & \leq -\eta\grad_t^\top\expt\frac{\mbgrad_t}{\|\mbgrad_t\|} + \frac{L\eta^2}{2}
\end{split}
\end{equation*}

\noindent Re-write $\mbgrad_t=(1-\epsilon)\grad_t+\epsilon\cgrad_t+\Delta_t$, where $\Delta_t$ is a zero-centered random variable whose expected L2 norm is bounded as below.
The proof can be found in~\ref{appendix:covariance}.

\begin{lemma}
The expected L2 norm of $\Delta_t$ is upper-bounded by.
$$\expt\|\Delta_t\|\leq \sqrt{\frac{\epsilon(1-\epsilon)\|\grad_t\|^2+\epsilon(1-\epsilon)\|\cgrad_t\|^2+(1-\epsilon)\sigma^2}{b}}$$.
\label{lemma:covariance}
\end{lemma}

\noindent To lower bound $\grad_t^\top\expt\frac{\mbgrad_t}{\|\mbgrad_t\|}$, we follow~\cite{bu2024automatic} to use the hyperplane perpendicular to $\grad_t$ to divide the support of $\Delta_t$ into two half-spaces where denote the positive half as: $H_+ := \{v : \grad^\top v > 0\}$.
Then using the symmetry assumption~\ref{aspt:grad_noise}, we have
\begin{equation*}
\begin{split}
\grad_t^\top&\expt\frac{\mbgrad_t}{\|\mbgrad_t\|} = \expt\bigl(\frac{(1-\epsilon)\|\grad_t\|^2+\grad_t^\top\Delta_t}{\|(1-\epsilon)\grad_t+\epsilon\cgrad_t+\Delta_t\|}\bigr)\\
& = \frac{1}{2}\expt\bigl(\frac{(1-\epsilon)\|\grad_t\|^2+\grad_t^\top\Delta_t}{\|(1-\epsilon)\grad_t+\epsilon\cgrad_t+\Delta_t\|}\big{|}\Delta_t\in H_+\bigr) + \frac{1}{2}\expt\bigl(\frac{(1-\epsilon)\|\grad_t\|^2-\grad_t^\top\Delta_t}{\|(1-\epsilon)\grad_t+\epsilon\cgrad_t-\Delta_t\|}\big{|}\Delta_t\in H_+\bigr) \\
& = \frac{1}{2}\expt\bigl(\underbrace{\frac{(1-\epsilon)\|\grad_t\|^2+\grad_t^\top\Delta_t}{\|(1-\epsilon)\grad_t+\epsilon\cgrad_t+\Delta_t\|}+\frac{(1-\epsilon)\|\grad_t\|^2-\grad_t^\top\Delta_t}{\|(1-\epsilon)\grad_t+\epsilon\cgrad_t-\Delta_t\|}}_{\star}\big{|}\Delta_t\in H_+\bigr)\\
& = \frac{1}{2}\expt\bigl(\star\big{|}\Delta_t\in H_+,\|\Delta_t\|\leq\epsilon\|\cgrad_t\|+\epsilon\|\grad_t\|\bigr)\prob(\|\Delta\|\leq\epsilon\|\cgrad_t\|+\epsilon\|\grad_t\|)\\
& \quad\quad\quad\quad\quad\quad\quad+ \frac{1}{2}\expt\bigl(\star\big{|}\Delta_t\in H_+,\|\Delta_t\|>\epsilon\|\cgrad_t\|+\|\grad_t\|\bigr)\prob(\|\Delta_t\|>\epsilon\|\cgrad_t\|+\epsilon\|\grad_t\|)
\end{split}
\end{equation*}

\noindent Lemma~\ref{lemma:star_positive} tells us that $\star$ is always non-negative under Assumption~\ref{aspt:dragger}, and thus we can only keep the first term.
The proof is provided in Appendix~\ref{appendix:proof_star_positive}.
\begin{lemma}
If $\Delta_t\in H_+$ and $\cgrad_t\perp\grad_t, \cgrad_t\perp\Delta_t$, then $\frac{(1-\epsilon)\|\grad_t\|^2+\grad_t^\top\Delta_t}{\|(1-\epsilon)\grad_t+\epsilon\cgrad_t+\Delta_t\|}+\frac{(1-\epsilon)\|\grad_t\|^2-\grad_t^\top\Delta_t}{\|(1-\epsilon)\grad_t+\epsilon\cgrad_t-\Delta_t\|} \geq 0$.
\label{lemma:star_positive}
\end{lemma}

\begin{equation*}
\begin{split}
\grad_t^\top&\expt\frac{\mbgrad_t}{\|\mbgrad_t\|}
\geq \frac{1}{2}\expt\bigl(\star\big{|}\Delta_t\in H_+,\|\Delta_t\|\leq\epsilon\|\cgrad_t\|+\epsilon\|\grad_t\|\bigr)\prob(\|\Delta_t\|\leq\epsilon\|\cgrad_t\|+\epsilon\|\grad_t\|)\\
& \geq \frac{1}{2}\expt\bigl(\frac{(1-\epsilon)\|\grad_t\|^2}{(1-\epsilon)\|\grad_t\|+\epsilon\|\cgrad_t\|+\|\Delta_t\|}\big{|}\Delta_t\in H_+,\|\Delta_t\|\leq\epsilon\|\cgrad_t\|+\epsilon\|\grad_t\|\bigr)(1-\frac{\expt\|\Delta_t\|}{\epsilon\|\cgrad_t\|+\epsilon\|\grad_t\|})\\
& \geq \frac{1}{2}\cdot\frac{(1-\epsilon)\|\grad_t\|^2}{\|\grad_t\|+2\epsilon\|\cgrad_t\|}\cdot(1-\frac{1}{\sqrt{b}}\cdot\frac{\sqrt{\epsilon(1-\epsilon)\|\grad_t\|^2+\epsilon(1-\epsilon)\|\cgrad_t\|^2+(1-\epsilon)\sigma^2}}{\epsilon\|\cgrad_t\|+\epsilon\|\grad_t\|}) \\
& \geq \frac{1-\epsilon}{2(1+2\epsilon C)}\cdot\bigl((1-\sqrt{\frac{1-\epsilon}{b\epsilon}})\|\grad_t\|-\frac{1}{\sqrt{b}}\cdot\frac{\sigma}{\epsilon(1+c)}\bigr)
\end{split}
\end{equation*}
, where the second inequality follows from Markov's inequality.

\noindent Thus we have
\begin{equation*}
\begin{split}
\expt[\loss_{t+1}-\loss_t|\param_t] & \leq -\eta\cdot\frac{1-\epsilon}{2(1+2\epsilon C)}\cdot\bigl((1-\sqrt{\frac{1-\epsilon}{b\epsilon}})\|\grad_t\|-\frac{1}{\sqrt{b}}\cdot\frac{\sigma}{\epsilon(1+c)}\bigr) + \frac{L\eta^2}{2}
\end{split}
\end{equation*}

\noindent Now we do a telescoping sum over the iterations
\begin{equation*}
\begin{split}
\loss_0-\loss_* & \geq \loss_0-\expt\loss_T = \sum_t\expt(\loss_t-\loss_{t+1}) \\
& \geq \eta\cdot\frac{1-\epsilon}{2(1+2\epsilon C)}\cdot\bigl((1-\sqrt{\frac{1-\epsilon}{b\epsilon}})\|\grad_t\|-\frac{1}{\sqrt{b}}\cdot\frac{\sigma}{\epsilon(1+c)}\bigr) - \frac{L\eta^2T}{2} 
\end{split}
\end{equation*}

\noindent We apply the same learning rate as in~\cite{bu2024automatic} $\eta=\frac{1}{L\sqrt{T}}$.
\begin{equation*}
\begin{split}
\loss_0-\loss_* & \geq \frac{1-\epsilon}{2L\sqrt{T}(1+2\epsilon C)}\cdot\bigl((1-\sqrt{\frac{1-\epsilon}{b\epsilon}})\|\grad_t\|-\frac{1}{\sqrt{b}}\cdot\frac{\sigma}{\epsilon(1+c)}\bigr) - \frac{1}{2L}
\end{split}
\end{equation*}
and finally
\begin{equation*}
\begin{split}
\min_t\expt\|\grad_t\|\leq\min_t\frac{1}{T}\expt\sum_t\|\grad_t\|\leq& \frac{\sigma}{(\sqrt{b}\epsilon-\sqrt{\epsilon(1-\epsilon)})\cdot(1+c)}\\
&+\frac{1}{\sqrt{T}}\cdot\frac{\sqrt{b\epsilon}}{\sqrt{b\epsilon}-\sqrt{1-\epsilon}}\cdot\frac{2L(1+2\epsilon C)}{1-\epsilon}\cdot(\loss_0-\loss_*+\frac{1}{2L})
\end{split}
\end{equation*}

\end{proof}

\section{Empirical Evaluation}
\label{sec:empirical_evaluation}

In this section, we'd like to answer the following 3 questions: 1) Does Assumption~\ref{aspt:dragger} hold in practice? 2) Does the hypothesis on $c$ to explain the sweet spot of micro-batch size hold in practice? 3) Does micro-batch clipping help improve performance beyond speech tasks?
%



\subsection{Empirical Evidence for Assumption~\ref{aspt:dragger}}
\label{subsec:empirical_evidence_dragger}

\paragraph{Experiment Setup.}
To answer question 1), we adopt the experimental setup from Wang \emph{et al.}~\cite{wang2024unintended}.
Specifically, we fine-tune 600M Conformer XL~\cite{zhang2020pushing} models on the LibriSpeech dataset~\cite{panayotov2015librispeech} and handcrafted canaries~\cite{wang2024efficiently}.
The model's encoder is pre-trained using BEST-RQ~\cite{chiu2022self} on the LibriLight dataset~\cite{kahn2020libri}.
The key advantage of this setup is the ability to treat the inserted canaries as surrogate draggers, thereby circumventing the technical challenge of identifying natural draggers in large models.

We compute the cosine similarity between 100 randomly selected pairs of dragger and benign gradients.
For calibration purposes, we also compute the cosine similarity between 100 pairs of benign gradients.

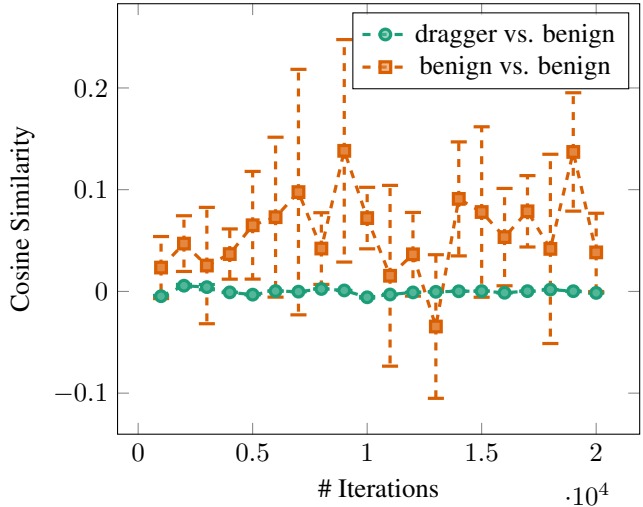
\begin{figure}[htb]
\centering
\begin{adjustbox}{width=0.5\linewidth}
\begin{tikzpicture}
\begin{axis}[
    cycle list/Dark2,
    mark list fill={.!75!white},
    cycle multiindex* list={
        Dark2\nextlist
        my marks\nextlist
        dashed\nextlist
        very thick\nextlist
    },
    every axis plot/.append style={errorBars},
    xlabel={\# Iterations},
    ylabel={Cosine Similarity},
    legend entries={dragger vs. benign, benign vs. benign},
    legend style={fill=white, fill opacity=0.6,text opacity=1},
]

\addplot table[col sep=comma, x=step, y=mean, y error=std] {data/cs_canary_ls.txt};
\addplot table[col sep=comma, x=step, y=mean, y error=std] {data/cs_ls_ls.txt};

\end{axis}
\end{tikzpicture}
\end{adjustbox}
\caption{Cosine Similarity between Crafted Dragger and Benign Examples.}
\label{fig:cs}
\end{figure}
\paragraph{Evaluation Results. }
The results are summarized in Figure~\ref{fig:cs}.
We observe that the magnitude of the cosine similarity between dragger and benign gradients is significantly smaller than the cosine similarity within benign gradients, with a negligible standard deviation.
This observation supports Assumption \ref{aspt:dragger} that dragger gradients are orthogonal to the benign gradient subspace.

\subsection{How $c$ Changes with Micro-batch Size?}

\paragraph{Experiment Setup.}
Again we re-use the setup in~\cite{wang2024unintended}.
First, we run micro-batch clipping with micro-batch size $1$, $4$ (per-core batch size), $512$ (mini-batch size) to substantiate the existence of an optimal micro-batch size that yields maximum performance.
Performance is assessed using word error rate (WER) on two splits of the LibriSpeech test dataset: test-clean, consisting of relatively clean utterances, and test-other, consisting of noisier utterances.

In the second step, the following proxy estimation is used to monitor the fluctuations in the value of $c$.
\begin{equation*}
\hat{c} := \frac{\text{mean}(\|\text{benign gradients}\|)}{\text{mean}(\|\text{dragger gradients}\|)}
\end{equation*}
To estimate the nominator, we use the mean norm of 100 LibriSpeech samples after trimming the largest 10\% to avoid the effect of hard-to-track natural draggers.
To estimate the denominator, we use the mean of the gradient norms of 100 inserted canaries.
The proxy $\hat{c}$ is computed for both non-private training and micro-batch clipping scenarios with micro-batch sizes $1$, $4$ and $512$.

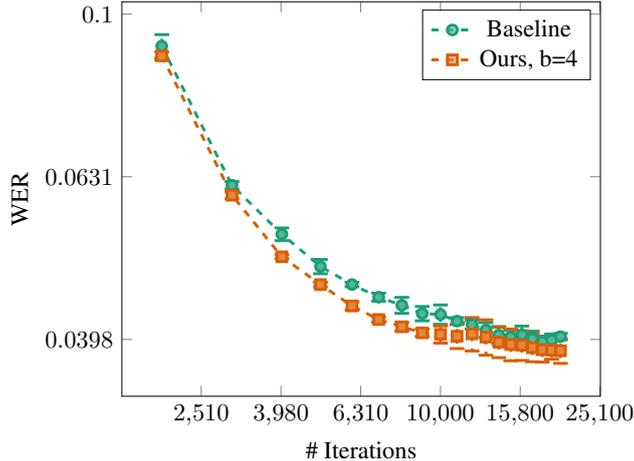
\begin{figure}[htb]
\centering
\begin{adjustbox}{width=0.5\linewidth}
\begin{tikzpicture}
\begin{axis}[
    cycle list/Dark2,
    mark list fill={.!75!white},
    cycle multiindex* list={
        Dark2\nextlist
        my marks\nextlist
        dashed\nextlist
        very thick\nextlist
    },
    every axis plot/.append style={errorBars},
    xlabel={\# Iterations},
    ylabel={WER},
    ymode=log,
    xmode=log,
    log ticks with fixed point,
    legend entries={Baseline, {Ours, b=4}},
    legend style={fill=white, fill opacity=0.6,text opacity=1},
]

\addplot table[col sep=comma, x=step, y=mean, y error=std] {data/ls_convergence_np.txt};
\addplot table[col sep=comma, x=step, y=mean, y error=std] {data/ls_convergence_apcc.txt};

\end{axis}
\end{tikzpicture}
\end{adjustbox}
\caption{Convergence curve for LibriSpeech experiments. Both x-axis and y-axis are log-scale to highlight the convergence speed advantage.}
\label{fig:ls_convergence}
\end{figure}
\paragraph{Evaluation Results.}
First, in Table~\ref{tab:sweet_spot}, we can observe that when micro-batch size is 4, the model achieves the best performance across all settings, improving test-other WER by 4.8\% relatively and test-clean by 4.1\% relatively.
Furthermore, we note that micro-batch clipping does not enhance performance when the micro-batch size is either 1 or 512. Instead, these configurations degrade WER on both the test-other and test-clean splits.
As per Theorem~\ref{thm:mcsgd_main}, this degradation is attributed to the presence of a large constant term when the micro-batch size is not optimally selected.
Furthermore, we note that micro-batch clipping does not enhance performance when the micro-batch size is either 1 or 512. Instead, these configurations degrade WER on both the test-other and test-clean splits.  As per Theorem \ref{thm:mcsgd_main}, this degradation is attributed to the presence of a large constant term when the micro-batch size is not optimally selected.
Besides, from Figure~\ref{fig:ls_convergence}, we can tell see the convergence rate advantage as predicted.

\begin{table}[th]
  \vspace{-5pt}
  \caption{Best WER on LibriSpeech within 20K fine-tuning steps. Mean and standard deviation are calculated across 3 runs. \textbf{Bold} highlights best. The checkpoint is picked to optimize test-other split.}
  \label{tab:sweet_spot}
  \centering
  \begin{tabular}{c|c|c|c|c}
    \toprule
    Test set & Baseline & Ours, $b=1$ & Ours, $b=4$ & Ours, $b=512$ \\
    \midrule
    test-clean & 1.93 $\pm$ 0.04 & 2.40 $\pm$ 0.13 & \textbf{1.85} $\pm$ 0.07 & 1.94 $\pm$ 0.00 \\
    test-other & 3.99 $\pm$ 0.01 & 4.86 $\pm$ 0.24  & \textbf{3.80} $\pm$ 0.05 & 4.03 $\pm$ 0.03 \\
    \bottomrule
  \end{tabular}
  \vspace{-5pt}
\end{table}

Second, the values of $\hat{c}$ for different micro-batch sizes are shown in Table~\ref{tab:proxy_c}.
The results conform with the conjecture that $\hat{c}$ is decreasing with respect to the micro-batch size $b$.
This observation provides a strong evidence to support the explanation for the existence of the optimal micro-batch size.

\emph{Remark.} We'd like to note that the difference in proxy $\hat{c}$ is still not big enough to level out the influence of $b$ in the constant term in Theorem~\ref{thm:mcsgd_main}, and this might be attributed to the several factors including the convergence rate bound not strictly tight, the noise in $\hat{c}$ measurement, and systematic bias between $\hat{c}$ and $c$.
However, we still believe that the trend of $\hat{c}$ is promising as an evidence for our explanation for sweet-spot micro-batch sizes, and we deem getting a stricter convergence bound and more accurate relationship between $b$ and $c$ important future directions to guide the choice of micro-batch sizes.

\begin{table}[th]
  \vspace{-5pt}
  \caption{The values of $\hat{c}$ under different micro-batch sizes.}
  \label{tab:proxy_c}
  \centering
  \begin{tabular}{c|c|c|c}
    \toprule
    Micro-batch size & $b=1$ & $b=4$ & $b=512$ \\
    \midrule
    $\hat{c}$ & 6.18 & 5.74 & 4.42 \\
    \bottomrule
  \end{tabular}
  \vspace{-5pt}
\end{table}

\subsection{Is Micro-batch Clipping Effective for Models Beyond Speech?}
\label{subsec:empirical_evidence_c}

To address question 3), we conduct an evaluation of micro-batch clipping on vision models and language models.
In accordance with the findings of \cite{wang2024efficiently}, we default to a micro-batch size equivalent to the per-core batch size due to its memory efficiency.
It is important to note that alternative micro-batch sizes may necessitate careful adjustment to mitigate potential memory overflow issues.

\subsubsection{Vision Tasks}

\paragraph{Experiment Setup: } For vision tasks, we choose apply micro-batch clipping to the DeiT-B model proposed in~\cite{touvron2021training}.
We train DeiT-B from scratch on the ImageNet dataset~\cite{deng2009imagenet} and achieves parity with the reported Top-1 accuracy in~\cite{touvron2021training}.
%
%
The mini-batch size used is 4096 and the micro-batch size is 32, the same as the per-core batch size for computational efficiency.

\paragraph{Evaluation Results: }
The results are summarized in Table~\ref{tab:vit}.
Adding micro-batch clipping improves the Top-1 accuracy by 1.5\% and Top-5 accuracy by 1.0\%.

\begin{table}[ht]
  \caption{ViT trained w/ or w/o adaptive micro-batch clipping.}
  \label{tab:vit}
  \centering
  \begin{tabular}{c|c|c}
    \toprule
    & Top-1 accuracy (\%) & Top-5 accuracy (\%) \\
    \midrule
    Baseline & 81.8 & 95.8 \\
    Ours & \textbf{83.3} & \textbf{96.8} \\
    \bottomrule
  \end{tabular}
\end{table}


\subsubsection{Language Tasks}
\label{subsubsec:language}

\paragraph{Experiment Setup: } For language tasks, we fine-tune a T5 model~\cite{raffel2020exploring} on the  superGlue dataset~\cite{wang2019superglue}.
The model is pre-trained on the C4 dataset~\cite{raffel2020exploring} for 1M steps.
The mini-batch size is 2048, and the mini-batch size is 16, the same as the per-core batch size.
Note that the test set for superGlue is private so we report the performance on the validation set.
Since we do not use any information of the validation set to improve the micro-batch clipping part, the improvement shown is fair.
Also we use accuracy across all subsets of superGlue to be able to compute the weighted average accuracy as a unified metric to compare 2 models.

\paragraph{Evaluation Results: }
%
%
Table 5 summarizes our findings. While micro-batch clipping only marginally improves average accuracy (0.1\%), it significantly boosts performance on the largest, unbalanced subset, ReCoRD (0.4\%), while negatively impacting smaller datasets.
This suggests varying gradient distributions across subsets, with micro-batch clipping suppressing gradients from smaller subsets, which conforms with the observation that ReCoRD has lower accuracy when trained with other data mixed (83.9\%) than when trained alone (84.5\%).
Training solely on ReCoRD yields a 0.3\% improvement as shown in Table~\ref{tab:t5_record}, reinforcing this hypothesis.

\begin{table}[ht]
  \caption{T5 trained w/ or w/o adaptive micro-batch clipping.}
  \label{tab:t5_mixed}
  \centering
  \begin{tabular}{c|c|c|c|c|c|c|c|c|c}
    \toprule
    Accuracy (\%) & wt avg & BoolQ & CB & COPA & MultiRC & ReCoRD & RTE & WiC & WSC \\
    \midrule
    $\|$Train$\|$~\cite{wang2019superglue} & - & 9427 & 250 & 400 & 5100 & 101k & 2500 & 6000 & 554  \\
    \midrule
    Baseline & 85.9 & \text{93.4} & \textbf{97.8} & \textbf{88.9} & \textbf{93.4} & 83.9 & \textbf{97.6} & \textbf{89.5} & \textbf{70.3} \\
    Ours & \textbf{86.0} & 91.8 & 95.6 & 88.3  & 91.6 & \textbf{84.3} & 95.7 & 88.5 & \textbf{70.3} \\
    \bottomrule
  \end{tabular}
\end{table}

These results highlight a crucial lesson: micro-batch clipping, while beneficial when data comes from the same domain but is noisy, can be detrimental with unbalanced multi-domain data.
In such scenarios, micro-batch clipping may hinder certain domains to favor others and should therefore be avoided.

\begin{table}[ht]
  \caption{T5 trained w/ or w/o adaptive micro-batch clipping.}
  \label{tab:t5_record}
  \centering
  \begin{tabular}{c|c|c}
    \toprule
    Accuracy (\%) & Baseline & Ours \\
    \midrule
    ReCoRD & 84.5 & \textbf{84.8} \\
    \bottomrule
  \end{tabular}
\end{table}

\section{Conclusion \& Discussion}
\label{sec:conclusion}

In this paper, we revisit micro-batch clipping, originally proposed in the context of differential privacy, from the lens of data pruning, inspired by recent observations made by~\cite{wang2024unintended,wang2024efficiently}.

Our convergence analysis demonstrates that micro-batch clipping can asymptotically accelerate the convergence rate for smooth loss functions. To elucidate the optimal micro-batch size falling between 1 and the mini-batch size, we introduce the concept of "dragger gradients." Combining this concept with our convergence analysis reveals a constant term minimized at a value between 1 and the mini-batch size.

Our analysis and explanation hinge on two novel hypotheses, which we empirically verify. Furthermore, we extend the application of micro-batch clipping to vision and language models. We find that when the input data is single-domain, micro-batch clipping can still enhance the performance of these models.

\paragraph{Limitations.}
Despite the demonstrated effectiveness of micro-batch clipping across speech, vision and language models, our research reveals limitations that warrant further investigation.
As evidenced in Section~\ref{subsubsec:language}, micro-batch clipping can adversely affect model performance when the input data originates from multiple distinct domains.
This constraint limits the applicability of micro-batch clipping in scenarios where data diversity is prevalent.

Furthermore, this limitation may raise potential fairness concerns, akin to other data pruning methods pointed out by~\cite{vysogorets2024towards}.
The preferential treatment of specific data domains could inadvertently introduce biases and inequities in model outcomes.
These limitations underscore the need for a refined micro-batch clipping approach that can effectively handle multi-domain data.

\paragraph{Future Work.}
Several intriguing avenues for future research emerge from this paper.
Firstly, the memory overhead introduced by micro-batch clipping remains a challenge.
While data sharding with a micro-batch size equal to the per-core batch size~\cite{wang2024unintended,wang2024efficiently} can mitigate this issue, the growing size of modern models necessitates the development of alternative solutions to ensure the computational efficiency of micro-batch clipping in training scenarios with model sharding alone.

Secondly, from a theoretical perspective, a more rigorous relationship between micro-batch size $b$ and $c$ is needed, as it could guide the choice of the appropriate micro-batch size to optimize performance.

Finally, we are eager to explore a broader range of gradient-based data pruning methods.
This includes investigating different variations of clipping, as well as other innovative gradient manipulation techniques based on diverse heuristics.
%





\bibliographystyle{plain}
\bibliography{ref}

\section{Proof for Lemma~\ref{lemma:covariance}}
\label{appendix:covariance}

\begin{proof}
First, we upper-bound the variance for the per-example gradients following the distribution described in Assumption~\ref{aspt:grad_noise}.
\begin{equation*}
\begin{split}
\expt\|\pegrad-\expt[\pegrad]\|^2 &= \expt\|\pegrad\|^2 - \|\expt\pegrad\|^2 \\
& \leq (1-\epsilon)\bigl(\|\grad\|^2+\sigma^2\bigr) + \epsilon\|\cgrad\|^2 - \|(1-\epsilon)\grad + \epsilon\cgrad\|^2 \\
& = \epsilon(1-\epsilon)\|\grad\|^2+\epsilon(1-\epsilon)\|\cgrad\|^2+(1-\epsilon)\sigma^2
\end{split}
\end{equation*}

Second, after the micro-batch averaging,
\begin{equation*}
\begin{split}
\expt\|\mbgrad-\expt[\mbgrad]\|^2 &= \frac{1}{b^2}\expt\bigl(\|\sum(\pegrad-\expt\pegrad)\|^2\bigr) \\
& = \frac{1}{b^2}\sum\expt\bigl(\|\pegrad-\expt\pegrad\|^2\bigr)\\
& = \frac{\epsilon(1-\epsilon)\|\grad\|^2+\epsilon(1-\epsilon)\|\cgrad\|^2+(1-\epsilon)\sigma^2}{b}
\end{split}
\end{equation*}
, where the second equation follows from the fact that per-example gradients are independent from each other.
\end{proof}

\section{Proof for Lemma~\ref{lemma:star_positive}}
\label{appendix:proof_star_positive}

\begin{proof}

If $\grad_t^\top\Delta_t\leq (1-\epsilon)\|\grad_t\|^2$, then the lemma obviously holds, so we only need to prove for the case when $\grad_t^\top\Delta_t> (1-\epsilon)\|\grad_t\|^2$.

\begin{equation*}
\begin{split}
& \epsilon^2\|\grad_t\|^2\|\cgrad_t\|+\|\grad_t\|^2\|\Delta\|^2\geq (\grad_t^\top\Delta_t)^2\\
\Rightarrow & \frac{((1-\epsilon)\|\grad_t\|^2+\grad_t^\top\Delta_t)^2}{(1-\epsilon)^2\|\grad_t\|^2+\epsilon^2\|\cgrad\|^2+\|\Delta_t\|^2+2(1-\epsilon)\grad_t^\top\Delta_t}\\
& \quad\quad\quad\quad\quad\quad\quad\quad\quad\quad\quad\quad\quad\quad\quad\quad\geq\frac{((1-\epsilon)\|\grad_t\|^2 - \grad_t^\top\Delta_t)^2}{(1-\epsilon)^2\|\grad_t\|^2+\epsilon^2\|\cgrad\|^2+\|\Delta_t\|^2-2(1-\epsilon)\grad_t^\top\Delta_t}\\
\Leftrightarrow &  \frac{(1-\epsilon)\|\grad_t\|^2+\grad_t^\top\Delta_t}{\|(1-\epsilon)\grad_t+\epsilon\cgrad_t+\Delta_t\|}+\frac{(1-\epsilon)\|\grad_t\|^2-\grad_t^\top\Delta_t}{\|(1-\epsilon)\grad_t+\epsilon\cgrad_t-\Delta_t\|} \geq 0
\end{split}
\end{equation*}
\end{proof}


\end{document}